\newtheorem{theorem}{Theorem}
\newtheorem{corollary}{Corollary}
\newtheorem{lemma}{Lemma}
\newtheorem{remark}{Remark}
\DeclareMathOperator*{\argmax}{argmax}
\title{Learning Unstable Dynamical Systems with Time-Weighted Logarithmic Loss}
\author{%
  Kamil Nar \\
  University of California, Berkeley\\
  \includegraphics[scale=0.3]{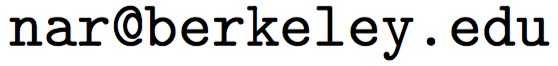}\\
   \And
   Yuan Xue \\
   Google Inc. \\
   \includegraphics[scale=0.3]{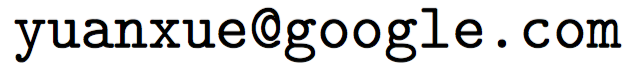}\\
   \And
   Andrew M.~Dai \\
   Google Inc. \\
   \includegraphics[scale=0.3]{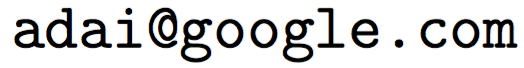}\\
}
\begin{document}

\maketitle

\begin{abstract}%
  When training the parameters of a linear dynamical model,
  the gradient descent algorithm is likely to fail to converge if the squared-error loss is used as the training loss function. Restricting the parameter space to a smaller subset and running the gradient descent algorithm within this subset can allow learning stable dynamical systems, but this strategy does not work for unstable systems. In this work, we look into the dynamics of the gradient descent algorithm and pinpoint what causes the difficulty of learning unstable systems. We show that observations taken at different times from the system to be learned influence the dynamics of the gradient descent algorithm in substantially different degrees. We introduce a time-weighted logarithmic loss function to fix this imbalance and demonstrate its effectiveness in learning unstable systems.

\end{abstract}

\section{Introduction}

Systems with memories that evolve over time require the use of a dynamical model for their representation. This model describes how the memory, or the state, of this system changes over time, how its state is affected by inputs to the system, and how it generates observable outputs. System identification corresponds to the task of learning the unknown parameters of this dynamical model from the known inputs and the observed outputs of the system.

Identification of dynamical systems from time-series data is an important problem for various applications, such as model prediction in reinforcement learning \citep{lambert2019low,zhang2016learning}, analysis of medical health records \citep{rubanova2019latent} and  prediction with financial time-series data \citep{tsay2014financial, ganeshapillai2013learning}.  However, the identification problems that arise in these applications pose some theoretical challenges:
\begin{enumerate}
    \item Unless the state of the system is observed with a known noiseless mapping, the identification of the system model is coupled with the state estimation. Consequently, the system identification task is in general a nonconvex problem \citep{moritz2018jmlr}. To circumvent this nonconvexity, the initial state can be assumed to be zero in control settings, and a known input can be used to drive the state of the system \citep{sastry1984model, sastry1989adaptive}.
    However, in medical and financial settings, the initial state of the system is typically not known a priori, and the deviations of the initial state from a nominal value cannot be neglected. Therefore, a joint and nonconvex optimization procedure is unavoidable in these settings to estimate the initial state of the system along with the unknown model parameters \citep{frigola2014variational, duncker2019learning}.
    
    \item For control of a dynamical system in a reinforcement learning task, it is most critical that the unstable\footnote{The term stability refers to bounded-input bounded-output stability. For continuous-time linear time-invariant systems, this corresponds to the condition where the eigenvalues of the state transition matrix have strictly negative real parts.} modes of the system be discovered and stabilized properly. Similarly, financial data and medical health records usually exhibit sudden changes in their pattern, which call for potentially unstable dynamics in their representation and estimation.
    However, the primary tools for nonconvex optimization, namely, the gradient methods, fail to converge and find an accurate model representation for unstable systems \citep{moritz2018jmlr}. 
    
    \item Especially in medical and financial data sets, the data are sampled irregularly; that is, the observations are not periodically sampled. The common heuristic approach to handle this situation is imputing the absent observations by interpolating the observed values of the output \citep{che2018recurrent}. This approach, however, might fail to capture the correct dynamics of the underlying system. An alternative is to use a model that can take account for the evolution of the state of the system during unobserved intervals without requiring periodic observations \citep{chen2018neural}.
\end{enumerate}

In this work, we use the gradient descent algorithm to identify the unknown parameters of a linear dynamical system from its observed outputs. We look into the dynamics of this algorithm and try to pinpoint what causes the inability of the gradient methods to converge when they are used to identify an unstable dynamical system. Similar to the work of \cite{chen2018neural}, our analysis uses a continuous-time model so that it directly applies to irregularly sampled data sets with no need for imputation.

\subsection{Our contributions}

By analyzing the dynamics of the gradient descent algorithm during identification of a linear dynamical system, we achieve the following.

\begin{enumerate}
    \item We obtain an upper bound on the learning rate of the gradient descent algorithm so that it can converge while learning a dynamical system with the squared-error loss. This upper bound explicitly depends on the eigenvalue of the system with the largest real part, and it shows that identifying a system becomes harder as the system becomes unstable. Furthermore, the upper bound on the learning rate shows that the samples taken at different times affect the convergence of the gradient descent algorithm in substantially different degrees.
    
    \item To enable the convergence of the gradient descent algorithm even when learning unstable systems, we introduce a new loss function which balances the influence of the samples taken at different times on the convergence of the algorithm. Then we demonstrate the effectiveness of this loss function while estimating linear dynamical systems. 
\end{enumerate}

Note that the primary question our work addresses is about the use of the gradient descent algorithm while learning a dynamical system: can this algorithm converge at all while learning the parameters of a dynamical system model? This is a different problem than whether a specific algorithm, or a specific model can learn the dynamical system of interest more accurately than the state-of-the-art.

\subsection{Related works}

\cite{moritz2018jmlr} studied the convergence of the gradient descent algorithm while learning linear dynamical systems. They demonstrated the  failure of this algorithm to learn even stable systems, and proposed a projected gradient method that fixed the issue for linear stable systems. Learning an unstable system, however, was considered to be infeasible. In contrast, we retain the standard gradient descent algorithm in this work, and we introduce a new loss function that allows learning even unstable systems with no necessity for projection.

If the state of a linear system is directly accessed, that is, if the output of the system is equal to the state of the system possibly with some additive noise, learning the system parameters can be formulated as an ordinary least squares problem. \cite{aleaddini2018cdc} and \cite{sarkar2019} make this assumption and arrive at a convex optimization problem. By doing so, they avoid the use of gradient descent algorithm, and therefore, they do not suffer from the issues pointed out by \cite{moritz2018jmlr}. However, as mentioned earlier, the assumption of having an access to the true internal state is unrealistic in many application domains, such as, health and finance.

Using variational inference is a common approach to estimate the initial state jointly with the dynamical model parameters in a Bayesian setting \citep{frigola2014variational, archer2015black, krishnan2017, eleftheriadis2017identification, duncker2019learning, gregor2018temporal}. In this approach, a separate model is employed to estimate the initial state from the whole observed trajectory. One of the models that we will consider in this work is a simpler, deterministic counterpart of this approach. 
We show that 
convergence issues of the gradient descent algorithm are also valid for this deterministic counterpart of variational inference.

Neural ordinary differential equations \citep{chen2018neural, rubanova2019latent} use a neural network to represent a continuous-time dynamical system. Since these models are also trained with the gradient descent algorithm,
they also suffer from the stability issues of the gradient descent algorithm while learning the parameters of a dynamical model. Indeed, the training data of all the examples outlined in these works involve trajectories that converge to either a stable equilibrium or a stable limit cycle of the system.

\section{Problem Formulation}

For each $k \in \mathcal K = \{1, \dots, K\}$, let $z_k : [0, \infty) \mapsto \mathbb R^n$ denote a continuous-time process representing the state of a linear time-invariant dynamical system:
\[ \frac{d z_k(t) }{dt} = A z_k(t) \quad \forall t \ge 0, \ \forall k \in \mathcal K, \] 
where $A \in \mathbb R^{n \times n}$ denotes the state transition dynamics of the system. Then the evolution of the process is described by $z_k(t) = e^{At}z_k(0)$ for all $t \ge 0$ for each $k \in \mathcal K$ \citep{callier1991}.
Let $\{x_k(t)\}_{t \in \mathcal T_k}$ be the set of samples obtained from $z_k$ at time instants $t \in \mathcal T_k$ via an observation matrix $ C \in  \mathbb R^{m \times n}$:
\[ x_k(t) =  Cz_k(t) = C e^{At} z_k(0) \quad \forall t \in \mathcal T_k, \ \forall k \in \mathcal K.\]
Define the initial state of the trajectory of $z_k$ as $s_k \in \mathbb R^n$; that is, let $s_k = z_k(0)$ for all $k \in \mathcal K$. We will look for a linear dynamical system model that fits all the trajectories, and we will use the gradient descent algorithm to reveal the difficulty of its convergence.
In particular, our goal is to study whether the gradient
descent algorithm is able to converge to a solution while solving the problem
\begin{subequations}
\label{eqn:model-0}
\begin{align}
    \underset{A, C}{\text{minimize}} & \quad \sum_{k \in \mathcal K} \sum_{t \in \mathcal T_k} \ell \left( x_k(t),  Ce^{At} s_k \right)
\end{align}
\end{subequations}
where $\ell$ is a differentiable loss function. We consider two choices for $\ell$ in the following sections: the squared-error loss as it is used both in classical works \citep{aastrom1971system} and in recent works \citep{moritz2018jmlr}, and the time-weighted logarithmic loss introduced in Section \ref{sec:section4}.


The set of initial states $\{s_k\}_{k \in \mathcal K}$ is left arbitrary in the statement of (\ref{eqn:model-0}); we consider three possible cases for these initial states, and our analysis in the following sections applies to all of these three cases.

\begin{enumerate}
\item Each $s_k$ is known or has a fixed value. In other words, the set $\{s_k\}_{k \in \mathcal K}$ is not updated by the gradient descent algorithm.

\item Each $s_k$ is also a variable, and the gradient descent algorithm optimizes over $\{s_k\}_{k \in \mathcal K}$ as well:
\begin{equation}
\label{eqn:model-1}
    \underset{A, C, \{s_k\}_{k \in \mathcal K}}{\text{minimize}}  \quad \sum_{k \in \mathcal K} \sum_{t \in \mathcal T_k} \ell \left( x_k(t),  Ce^{At}s_k \right) 
\end{equation}

\item Each $s_k$ is output of a state estimator:
\[ s_k = g_\phi \left( \{t, x_k(t)\}_{t \in \mathcal T_k} \right) \quad \forall k \in \mathcal K,\]
where $\phi$ is the parameters of this estimator, and the gradient descent algorithm solves the problem
\begin{subequations}
\label{model:third}
\begin{align}
    \underset{A, C, \phi}{\text{minimize}} & \quad \sum_{k \in \mathcal K} \sum_{t \in \mathcal T_k} \ell \left( x_k(t), Ce^{At}s_k \right)
    + \mathcal L\left( \phi \right)
    \\
    \text{subject to}
    & \quad s_k = g_\phi\left( \{t, x_k(t) \}_{t \in \mathcal T_k} \right) \quad   \forall k \in \mathcal K,
\end{align}
\end{subequations}
where $\mathcal L$ is an additional loss term associated with the estimation of the initial state, and it satisfies
\[ \frac{\partial \mathcal L}{\partial A} = 0, \ \frac{\partial \mathcal L}{\partial C} = 0. \]
This case can be considered as the deterministic counterpart of the framework used in variational inference of state space models \citep{jordan1999introduction, archer2015black}. This comparison is discussed further in Section \ref{sec:discussion}.
\end{enumerate}

In the following sections, we will demonstrate the analysis and state the theorems for problem (\ref{eqn:model-1}) in the second case. The statements are identically valid for the other two cases, as explained in Appendix \ref{app:initial}.



\section{Learning with Squared-Error Loss}
\label{sec:sec-lti}

In this section, we consider problem (\ref{eqn:model-1}) with the squared-error loss:
\begin{align*}
    \underset{A, C, \{s_k\}_{k \in \mathcal K}}{\text{minimize}} & \quad \sum_{k \in \mathcal K} \sum_{t \in \mathcal T_k} {\| x_k(t) - C e^{At}s_k \|}_2^2 
\end{align*}
If we use the gradient descent algorithm to solve problem (\ref{eqn:model-1}), the learning rate of the algorithm needs to be sufficiently small for the algorithm to converge \citep{bertsekas-nonlinear}. 
The next theorem gives an upper bound on the learning rate as a necessary condition for the convergence of the algorithm.

\begin{theorem}
\label{theorem:step-size}
Let $\{z_k\}_{k \in \mathcal K}$ be a set of trajectories, and let $s_k$ denote the initial state for trajectory $z_k$ for each $k \in \mathcal K$. Define the set of sampling instants of $z_k$ as $\mathcal T_k$, and denote the samples taken from this trajectory by $\{x_k(t)\}_{t \in \mathcal T_k}$. Assume that  the gradient descent algorithm is used to solve the problem
\begin{align}
\label{problem:linear}
    \min_{A, C,  \{s_k\}_{k \in \mathcal K}} & \quad \sum_{k \in \mathcal K} \sum_{t \in \mathcal T_k} \left\| x_k(t) - Ce^{At}s_k \right\|_2^2.
\end{align}
Then for almost every initialization, the learning rate of the gradient descent algorithm, $\delta$, must satisfy
\[ \delta \le \frac{2}{ \lambda_{\min} \left( \rho^2 \sum_{k \in \mathcal K} \sum_{t \in \mathcal T_k} t^2 e^{2 \text{\emph{Re}}(\Lambda) t} \hat s_k \hat s_k^\top \right)} \]
so that the algorithm can converge to the solution $\big(\hat A, \hat C,  \{\hat s_k\}_{k\in \mathcal K} \big)$ achieving zero training error,
where $\lambda_\text{\emph{min}}(\cdot)$ denotes the minimum eigenvalue of its argument, $\Lambda$ is the eigenvalue of $\hat A$ with the largest real part, $\rho^2 = \max_{u \in \mathcal U} \|\hat Cu\|_2^2$, and $\mathcal U$ is the set of eigenvectors of $\hat A$ corresponding to $\Lambda$.
\end{theorem}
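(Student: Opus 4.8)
The plan is to read the claimed inequality as a necessary condition for the gradient-descent map to have the zero-error solution $\hat\theta := (\hat A, \hat C, \{\hat s_k\})$ as a \emph{stable} fixed point, and to control this through the Hessian of the objective at $\hat\theta$. Writing the objective as $L(\theta) = \sum_{k,t}\|r_{k,t}(\theta)\|_2^2$ with residuals $r_{k,t} = x_k(t) - Ce^{At}s_k$, the first observation is that at a solution achieving zero training error every residual vanishes, so the curvature term $\sum_{k,t}\langle r_{k,t}, \nabla^2 r_{k,t}\rangle$ drops out and the Hessian equals the Gauss--Newton matrix $H = 2J^\top J \succeq 0$, where $J$ is the Jacobian of the stacked residual vector at $\hat\theta$. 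The gradient-descent iteration linearizes near $\hat\theta$ as $\theta_{n+1}-\hat\theta \approx (I-\delta H)(\theta_n-\hat\theta)$, so convergence to $\hat\theta$ forces the spectral radius of $I-\delta H$ to be at most one, i.e.\ $\delta\,\lambda_{\max}(H)\le 2$. The phrase ``for almost every initialization'' is what turns this into a genuinely necessary condition: if instead $\delta\,\lambda_{\max}(H) > 2$, then $\hat\theta$ is an unstable fixed point of the (for generic $\delta$, locally diffeomorphic) gradient map, and by the stable/center-manifold theorem the set of initializations from which the iterates converge to $\hat\theta$ has measure zero. Hence convergence from a positive-measure set of initializations implies $\delta \le 2/\lambda_{\max}(H)$.

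It then suffices to exhibit a single parameter direction along which the curvature is at least $\lambda_{\min}(M)$, writing $M := \rho^2 \sum_{k,t} t^2 e^{2\mathrm{Re}(\Lambda)t}\hat s_k\hat s_k^\top$, since $\lambda_{\max}(H)$ dominates the Rayleigh quotient in every direction and $2/\lambda_{\max}(H)\le 2/\lambda_{\min}(M)$ once $\lambda_{\max}(H)\ge\lambda_{\min}(M)$. The natural choice is the direction in $A$-space that perturbs the dominant eigenvalue $\Lambda$. Let $v$ be a unit right eigenvector of $\hat A$ for $\Lambda$ attaining $\|\hat Cv\|_2^2=\rho^2$, and $w^\top$ the matched left eigenvector with $w^\top v = 1$; take $A(\epsilon)=\hat A+\epsilon\,vw^\top$. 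Using $\frac{d}{d\epsilon}\big|_0 e^{A(\epsilon)t}=\int_0^t e^{\hat A(t-\tau)}\,vw^\top e^{\hat A\tau}\,d\tau$ together with the eigenvector identities $e^{\hat A(t-\tau)}v=e^{\Lambda(t-\tau)}v$ and $w^\top e^{\hat A\tau}=e^{\Lambda\tau}w^\top$, the integrand collapses to $e^{\Lambda t}vw^\top$ and the derivative is $t\,e^{\Lambda t}\,vw^\top$. The induced first-order change in the prediction for sample $(k,t)$ is thus $t\,e^{\Lambda t}(w^\top\hat s_k)\,\hat Cv$, with squared norm $t^2 e^{2\mathrm{Re}(\Lambda)t}\,|w^\top\hat s_k|^2\,\rho^2$. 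Summing over all samples and normalizing the direction by $\|vw^\top\|_F=\|w\|_2$, the Rayleigh quotient of $H$ along this direction is $2\,w^\top M w/\|w\|_2^2$, which is at least $2\lambda_{\min}(M)\ge\lambda_{\min}(M)$; this is the step where $\lambda_{\min}$ enters, because $w$ is pinned to be the left eigenvector rather than free, so only its worst-case Rayleigh quotient is available. This gives $\lambda_{\max}(H)\ge\lambda_{\min}(M)$ and hence the stated bound.

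I expect two genuine difficulties. First, the eigenvalue-sensitivity computation is clean only when $\Lambda, v, w$ are real; for a complex dominant eigenvalue the perturbation $vw^\top$ is not a legal real direction in $A\in\mathbb R^{n\times n}$, so I must instead work in the real two-plane spanned by $\mathrm{Re}(vw^\top)$ and $\mathrm{Im}(vw^\top)$ and verify that the curvature there is still governed by $|e^{\Lambda t}|^2=e^{2\mathrm{Re}(\Lambda)t}$; managing the conjugate pair and the resulting normalization is the fiddly part. Second, the ``almost every initialization'' clause must be made rigorous: one needs that for all but finitely many $\delta$ the gradient map is a local diffeomorphism near $\hat\theta$ and that an eigenvalue of $I-\delta H$ of modulus exceeding one forces a positive-dimensional unstable manifold, so that the stable set is Lebesgue-null. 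The remaining work --- tracking the squared-loss factor of $2$ through the Gauss--Newton Hessian, and checking that the same $A$-perturbation argument is insensitive to which of the three initial-state parametrizations is used (which holds because the perturbation acts only through $A$ and $\partial\mathcal L/\partial A=0$) --- is routine bookkeeping.
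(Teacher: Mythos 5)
Your proposal follows essentially the same route as the paper's own proof: linearize the gradient-descent map at the zero-error equilibrium, invoke the stable-manifold argument for the ``almost every initialization'' clause, use the Gauss--Newton collapse of the Hessian at a zero-residual point (the paper's Lemma~1), and lower-bound the curvature of the $A$-block by testing against the rank-one direction $vw^\top$ built from the right and left eigenvectors of $\Lambda$, finally passing from $|\langle w,\hat s_k\rangle|^2$ to $\lambda_{\min}$ of the weighted Gram matrix of initial states. The only differences are cosmetic --- you compute the derivative of $e^{At}$ via the integral formula where the paper expands the power series, and you explicitly flag the complex-$\Lambda$ realness issue that the paper silently glosses over --- so this is the same proof.
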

\begin{proof}
See Appendix \ref{appendix:proof-theorem-step-size}.
\end{proof}

Note that the eigenvalues of a linear dynamical system have a particular meaning in control theory: they describe the stability of the system \citep{callier1991}. If any eigenvalue of $\hat A$ has a real part that is strictly positive, then the state of the system will grow unboundedly large from almost all initial points; and the system is called unstable in this case. If, on the other hand, all eigenvalues of $\hat A$ has a negative real part, then the state of the system will converge to a fixed point from all initial points, and the system will be stable.

The condition about reaching zero training error might be somewhat restrictive, but the main purpose of Theorem \ref{theorem:step-size} is not to prescribe a learning rate for all possible cases; it is to reveal that the samples taken at different times affect the convergence of the algorithm very differently. Indeed, Theorem \ref{theorem:step-size} shows that if the gradient descent algorithm is used to learn an unstable system, samples taken at later times impose a bound on the required learning rate exponentially more strict, which renders learning an unstable dynamical system infeasible.

Note that if the set of initial states $\{\hat s_k\}_{k \in \mathcal K}$ does not span the whole state space, then the bound given in Theorem \ref{theorem:step-size} will be void. This suggests that it will be easier to train a dynamical model if the initial states of the trajectories given in the training data do not have a large variance. However, this does not mean the learned model will be accurate. Since there is no information available about how the system evolves for the initial states in the nullspace of $\sum_{k \in \mathcal K} \hat s_k \hat s_k^\top$, the model learned will fail to predict the behavior of the system for the initial states with a nonzero component in this unlearned subspace as well.

The appearance of $\rho$ in Theorem \ref{theorem:step-size} reflects the notion of observability \citep{callier1991}. Based on the relationship between the matrices $\hat A$ and $\hat C$, it may not be possible to observe certain eigenvalues, or modes, of the learned system in its output; these modes are called unobservable modes. As these modes do not appear in the output of the learned system, they cannot affect the gradient descent algorithm. 

\begin{remark} The analysis for Theorem \ref{theorem:step-size} shows that, for the Hessian $H$ of the loss function (\ref{problem:linear}) at $(\hat A, \hat C)$, the ratio of the largest eigenvalue to the smallest eigenvalue of $H$ satisfies
\[ \frac{\lambda_{\max}(H)}{\lambda_{\min}(H)} \ge \frac{
\lambda_{\min} \left( \rho_1^2 \sum_{k \in \mathcal K} \sum_{t \in \mathcal T_k} t^2 e^{2 \text{\emph{Re}}(\lambda_1) t} \hat s_k \hat s_k^\top \right)
}{
\lambda_{\max} \left( \rho_2^2 \sum_{k \in \mathcal K} \sum_{t \in \mathcal T_k} t^2 e^{2 \text{\emph{Re}}(\lambda_2) t} \hat s_k \hat s_k^\top \right)
}
\]
for any pair of eigenvalues $(\lambda_1, \lambda_2)$ of $\hat A$, where $\rho_1 = \|\hat C u_1\|_2$, $\rho_2 = \|\hat C u_2\|_2$, and $u_1$, $u_2$ are the right eigenvectors of $\hat A$ corresponding to $\lambda_1$, $\lambda_2$, respectively. This implies that, if the loss function can be represented well by its second order approximation around $(\hat A, \hat C)$, local convergence rate for estimating the eigenvalue $\lambda_2$ will require 
\[O\left( \left[{\log\left( \left( 1 - \beta
\frac{\sum_{k\in \mathcal K} \sum_{t \in \mathcal T_k} t^2 e^{2 Re(\lambda_2)t}}
{\sum_{k\in \mathcal K} \sum_{t \in \mathcal T_k} t^2 e^{2 Re(\lambda_1)t}}
\right)^{-1} \right)
} \right]^{-1}\right)\]
iterations of the gradient descent algorithm, where $\beta$ is some constant depending on $\rho_1, \rho_2$ and $\sum_{k\in \mathcal K} \hat s_k \hat s_k^\top$. This shows that  learning the stable modes of a system can become infeasible when the system is unstable. See Appendix \ref{app:conv-rate} for more details.
\end{remark}

The necessary condition given in Theorem \ref{theorem:step-size} implies that the convergence of the algorithm gives us information about the rightmost eigenvalue of the dynamical system that is being estimated. This is stated in Corollary \ref{first-corol}.

\begin{corollary}
\label{first-corol}
Assume that the observation matrix $C = I$, the gradient descent algorithm is used to solve the problem (\ref{problem:linear}) and the algorithm has converged from a random\footnote{The random distribution is assumed to assign zero probability to every set with Lebesgue measure zero.} initialization to the solution $\big(\hat A, \{\hat s_k\}_{k\in \mathcal K} \big)$ achieving zero training error. Then the eigenvalue of $\hat A$ with the largest real part, $\Lambda$, almost surely satisfies
\[
 \text{\emph{Re}}(\Lambda)
 \le \inf_{\tau > 0} \frac{1}{2 \tau} \log \left[ \frac{1}{\delta \tau^2}
\frac{2}{\lambda_{\min}\left( \sum_{k \in \mathcal K} \sum_{t \in \mathcal T_k} \hat s_k \hat s_k^\top \mathbf{1}_{\{t \ge \tau\}} \right)} \right] \hspace{0.65in}
\text{if} \ \  \text{\emph{Re}}(\Lambda) >0, \]
\[ \text{\emph{Re}}(\Lambda) \le
\inf_{\tau_{2} > \tau_{1} > 0} \frac{1}{2 \tau_{2}} \log\left[
\frac{1}{\delta \tau_{1}^2}
\frac{2}{\lambda_{\min}\left( \sum_{k \in \mathcal K} \sum_{t \in \mathcal T_k} \hat s_k \hat s_k^\top 
\mathbf{1}_{\{\tau_{1} \le t \le \tau_{2}\}} \right)} \right]
\quad \text{if} \ \  \text{\emph{Re}}(\Lambda) < 0.\]

\end{corollary}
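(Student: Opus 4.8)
The plan is to derive both inequalities directly from Theorem \ref{theorem:step-size} by specializing it to $\hat C = I$ and then lower-bounding the minimum eigenvalue appearing on its right-hand side by discarding most of the time samples.

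First I would set $\hat C = I$ in Theorem \ref{theorem:step-size}. Since the eigenvectors in $\mathcal U$ may be taken to have unit norm and $\|\hat C u\|_2 = \|u\|_2$, this gives $\rho^2 = 1$, so the necessary condition of the theorem rearranges (using $\delta > 0$ and $\lambda_{\min} > 0$) into the statement that, for almost every initialization,
\[ \lambda_{\min}\Big( \sum_{k\in\mathcal K}\sum_{t\in\mathcal T_k} t^2 e^{2\mathrm{Re}(\Lambda)t}\,\hat s_k \hat s_k^\top\Big) \le \frac{2}{\delta}. \]
All that remains is to produce, for each sign of $\mathrm{Re}(\Lambda)$, a convenient lower bound on this minimum eigenvalue that isolates $\mathrm{Re}(\Lambda)$ inside an exponential.

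The key tool is monotonicity of $\lambda_{\min}$ under the Loewner order: if $M \succeq N \succeq 0$ then $\lambda_{\min}(M) \ge \lambda_{\min}(N)$, and since each $\hat s_k \hat s_k^\top \succeq 0$, deleting any subset of the $(k,t)$ terms only decreases the matrix in this order. For $\mathrm{Re}(\Lambda) > 0$, the scalar $t^2 e^{2\mathrm{Re}(\Lambda)t}$ is increasing in $t$, so for any threshold $\tau > 0$ I keep only the samples with $t \ge \tau$ and bound each retained coefficient below by its value at $t = \tau$, obtaining
\[ \sum_{k}\sum_{t} t^2 e^{2\mathrm{Re}(\Lambda)t}\,\hat s_k \hat s_k^\top \;\succeq\; \tau^2 e^{2\mathrm{Re}(\Lambda)\tau}\sum_{k}\sum_{t}\hat s_k \hat s_k^\top \mathbf 1_{\{t\ge\tau\}}. \]
Taking $\lambda_{\min}$ of both sides, combining with the displayed necessary condition, dividing through, and taking logarithms then solves for $\mathrm{Re}(\Lambda)$; since $\tau$ was arbitrary, taking the infimum over $\tau > 0$ yields the first claimed bound.

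For $\mathrm{Re}(\Lambda) < 0$ the coefficient $t^2 e^{2\mathrm{Re}(\Lambda)t}$ is no longer monotone, which is the only genuinely delicate point: a one-sided threshold no longer lower-bounds the coefficients. Instead I restrict to a window $\tau_1 \le t \le \tau_2$ and bound the two factors separately — $t^2 \ge \tau_1^2$ because $t \ge \tau_1$, and $e^{2\mathrm{Re}(\Lambda)t} \ge e^{2\mathrm{Re}(\Lambda)\tau_2}$ because $\mathrm{Re}(\Lambda) < 0$ and $t \le \tau_2$ — so every retained coefficient is at least $\tau_1^2 e^{2\mathrm{Re}(\Lambda)\tau_2}$. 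The same Loewner-monotonicity step, followed by the identical logarithmic rearrangement and an infimum over $\tau_2 > \tau_1 > 0$, gives the second bound. The \emph{almost surely} qualifier is inherited verbatim from the \textbf{almost every initialization} hypothesis of Theorem \ref{theorem:step-size}, so no additional probabilistic argument is required.
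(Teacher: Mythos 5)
Your derivation is correct and is exactly the route the paper intends: the corollary is stated as an immediate consequence of Theorem \ref{theorem:step-size} (no separate proof is given), obtained by setting $\rho^2=1$ for $C=I$ and lower-bounding the weighted Gram matrix via Loewner monotonicity after restricting to $t\ge\tau$ (resp.\ $\tau_1\le t\le\tau_2$) and bounding the scalar coefficients. Your handling of the non-monotone coefficient in the $\mathrm{Re}(\Lambda)<0$ case, and the observation that the almost-sure qualifier is inherited from the theorem, match what the statement requires.
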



\section{Learning with Time-Weighted Logarithmic Loss}
\label{sec:section4}

Theorem \ref{theorem:step-size} shows that when the gradient descent algorithm is used to learn the parameters of an unstable dynamical system, the effect of the samples taken at later times are exponentially more weighted around a global minimum. It is important to note that this is the case for the choice of squared-error loss as the training loss function. In this section, we introduce a new loss function in order to balance the effects of all samples on the dynamics of the algorithm. This new loss function greatly relaxes the necessary condition given in Theorem 1, and it enables training even unstable linear systems with the gradient descent algorithm.

For any $\epsilon > 0$, define $F_{\epsilon}: \mathbb R \to \mathbb R$ as
\begin{equation} F_{\epsilon}(\xi) = \left\{ \begin{array}{r l}
\log(\epsilon + \xi) - \log(\epsilon) & \quad \xi \ge 0,\\
-\log(\epsilon - \xi) + \log(\epsilon) & \quad \xi < 0.
\end{array}
\right.
\label{define:F}
\end{equation}
Given two trajectories $\{x(t)\}_{t \in \mathcal T}$ and $\{y(t)\}_{t \in \mathcal T}$ in $\mathbb R^n$, consider the loss function defined as
\[ \ell(x, y) = \sum_{t \in \mathcal T} \sum_{j=1}^n \frac{1}{t^2} \left( F_\epsilon(e_j^\top x(t)) - F_\epsilon(e_j^\top y(t))
\right)^2,
\]
where $e_j$ denotes the $j$-th standard basis vector with a 1 in its $j$-th coordinate and 0 in all other coordinates.
Note that $\ell(x, y)$ is zero if and only if $x(t)=y(t)$ for all $t \in \mathcal T$; and it is strictly positive otherwise. Similar to Section \ref{sec:sec-lti}, we will analyze this loss functions for learning linear dynamical systems.

\begin{theorem}
\label{second-theorem}
Let $\{z_k\}_{k \in \mathcal K}$ be a set of trajectories, and let $s_k$ denote the initial state for trajectory $z_k$ for each $k \in \mathcal K$. Define the set of sampling instants of $z_k$ as $\mathcal T_k$, and denote the samples taken from this trajectory by $\{x_k(t)\}_{t \in \mathcal T_k}$. Assume that  the gradient descent algorithm is used to solve 
\begin{equation}
\label{pre-loss}
    \min_{A, C, \{s_k\}_{k \in \mathcal K}} \ \sum_{k \in \mathcal K} \sum_{t \in \mathcal T_k} \sum_{j=1}^n \frac{1}{t^2} \left( F_\epsilon(e_j^\top x_k(t)) - F_\epsilon(e_j^\top C e^{At}s_k) \right)^2,
\end{equation}
where $F_\epsilon$ is as defined in (\ref{define:F}). Then for almost every initialization, the learning rate $\delta$ of the gradient descent algorithm must satisfy
\[ \delta \le \frac{2}{\lambda_\text{\emph{min}} \left( \sum_{k \in \mathcal K} \sum_{t \in \mathcal T_k} \frac{ \rho^2 e^{2\text{Re}(\Lambda)t}}{  {({\|\hat C e^{\hat At} \hat s_k\|}_\infty + \epsilon)}^2} \hat s_k \hat s_k^\top \right)}\]
so that the algorithm can converge to the solution $(\hat A, \hat C,  \{\hat s_k\}_{k \in \mathcal K})$ achieving zero training error, where $\Lambda$ is the eigenvalue of $\hat A$ with the largest real part, $\rho^2 = \max_{u \in \mathcal U} \|\hat C u\|_2^2$, and $\mathcal U$ is the set of right-eigenvectors of $\hat A$ corresponding to its eigenvalue $\Lambda$.
\end{theorem}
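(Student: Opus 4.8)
The plan is to mirror the argument behind Theorem \ref{theorem:step-size}, replacing the squared-error residuals with the time-weighted logarithmic ones. The backbone is a standard instability argument: gradient descent with step size $\delta$ can converge to a given zero-error minimizer $(\hat A,\hat C,\{\hat s_k\})$ from a positive-measure set of initializations only if the Hessian $H$ of the objective at that point satisfies $\delta\le 2/\lambda_{\max}(H)$; otherwise the iteration map $\theta\mapsto\theta-\delta\nabla(\cdot)$ has an eigenvalue below $-1$ at the fixed point, which by the stable-manifold theorem is escaped from almost every initialization. It therefore suffices to lower bound $\lambda_{\max}(H)$ by $\lambda_{\min}(M)$, where $M$ is the matrix in the statement, since then $\delta\le 2/\lambda_{\max}(H)\le 2/\lambda_{\min}(M)$.

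First I would exploit that the objective (\ref{pre-loss}) is a sum of squared residuals $r$ that all vanish at the zero-error solution. Consequently the second-order residual terms drop out of $H$ and only the Gauss--Newton part survives: $H$ is a positive sum of outer products $\nabla r\,\nabla r^\top$ of residual gradients evaluated at $(\hat A,\hat C,\{\hat s_k\})$. Being positive semidefinite, $H$ has the property that $\lambda_{\max}(H)$ is at least the second directional derivative of the objective along any unit-norm parameter perturbation, so the task reduces to exhibiting a single direction whose directional curvature is at least $\lambda_{\min}(M)$.

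Each residual gradient factors, by the chain rule, into $F_\epsilon'$ evaluated at the fitted output times the parameter-sensitivity of $\hat C e^{\hat A t}\hat s_k$. From the definition (\ref{define:F}) one gets $F_\epsilon'(\xi)=1/(\epsilon+|\xi|)$ (continuous across $\xi=0$, and the fitted outputs are generically nonzero, so differentiability holds), and bounding $|e_j^\top(\cdot)|\le\|\cdot\|_\infty$ produces the denominator $(\|\hat C e^{\hat A t}\hat s_k\|_\infty+\epsilon)^2$. For the sensitivity I would perturb $A$ along the rank-one direction $E=uv^\top$, where $u$ and $v$ are the right and left eigenvectors of $\hat A$ for its rightmost eigenvalue $\Lambda$; differentiating the matrix exponential gives $\tfrac{d}{d\eta}e^{(\hat A+\eta uv^\top)t}\hat s_k=u\int_0^t e^{\hat A(t-\tau)}u\,v^\top e^{\hat A\tau}\hat s_k\,d\tau$, and the eigen-relations $e^{\hat A(t-\tau)}u=e^{\Lambda(t-\tau)}u$ and $v^\top e^{\hat A\tau}=e^{\Lambda\tau}v^\top$ collapse the convolution to $t\,e^{\Lambda t}(v^\top\hat s_k)\,u$. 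Thus the output moves by $\hat C u\, t\,e^{\Lambda t}(v^\top\hat s_k)$, of squared magnitude $\|\hat C u\|_2^2\,t^2 e^{2\mathrm{Re}(\Lambda)t}|v^\top\hat s_k|^2$; the $t^2$ here cancels the $1/t^2$ weight, which is exactly the mechanism by which the new loss removes the $t^2$ present in Theorem \ref{theorem:step-size}.

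Assembling the two factors, the curvature along $E=uv^\top$ equals, after normalizing by $\|E\|_F^2=\|u\|^2\|v\|^2$, a positive multiple of $v^\top M v$ once $u$ is chosen to maximize $\|\hat C u\|_2$, which supplies $\rho^2$. Finishing with $v^\top M v\ge\lambda_{\min}(M)\|v\|^2$ yields $\lambda_{\max}(H)\ge\lambda_{\min}(M)$ and hence the claimed bound. I expect the main obstacle to be the step concerning the matrix-exponential derivative: verifying that the convolution integral collapses cleanly requires care when $\Lambda$ is defective or complex, in which case I would pass to the real Jordan form and retain only the dominant $\mathrm{Re}(\Lambda)$ contribution, and one must also check that the coordinatewise factor $F_\epsilon'$ combines with the vector-valued sensitivity so that the $\|\cdot\|_\infty$ replacement is a genuine lower bound rather than loose in the wrong direction.
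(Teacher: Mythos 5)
Your proposal is correct and follows essentially the same route as the paper's proof: the instability/linearization argument at the zero-error equilibrium, the Gauss--Newton reduction of the Hessian via the paper's Lemma~\ref{first-lemma}, the test direction $uv^\top$ built from the right and left eigenvectors of $\Lambda$ (yielding the $t e^{\Lambda t}$ factor that cancels the $1/t^2$ weight), the bound $|e_j^\top \hat C e^{\hat A t}\hat s_k| \le \|\hat C e^{\hat A t}\hat s_k\|_\infty$ in the denominator, and the final passage to $\lambda_{\min}$ of the outer-product sum. The only cosmetic difference is that you differentiate the matrix exponential via the Duhamel integral formula while the paper differentiates the power series term by term; both collapse to the same expression under the eigenvector relations.
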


\begin{proof} See Appendix \ref{appendix:proof-adapted-loss}. \end{proof}

The necessary conditions on the step size given in Theorem \ref{second-theorem}
and in Theorem \ref{theorem:step-size} are obtained by following the identical analysis procedure. Theorem \ref{second-theorem} shows that the loss function (\ref{pre-loss}) substantially relaxes the necessary condition given in Theorem \ref{theorem:step-size}, and it balances the weights of all the sampling instants on the dynamics of the gradient descent algorithm. In other words, it makes it easier for the gradient descent algorithm to converge to the global minima. This is demonstrated in the next section.


\section{Experiments}

To check if the time-weighted logarithmic loss function introduced in Theorem \ref{second-theorem} allows learning linear dynamical systems with the gradient descent algorithm, we generated a set of output trajectories from randomly generated linear systems and trained a linear model with this data set by using the logarithmic loss function. We also trained the model with the same data set by using the mean-squared-error loss to compare the two estimates.

For the experiments, we considered the discretized version of the dynamical systems. In other words, we used
\[ z_k(t) = A^t z_k(0) \quad \forall t \in \mathbb N, \ \forall k \in \mathcal K.\]
Note that with this discrete-time representation, the stability of the system is described based on the position of the eigenvalues relative to the unit circle. The system is stable if all of its eigenvalues are inside the unit circle.

We randomly generated $A \in \mathbb R^{n\times n}$ and $C \in \mathbb R^n$ to produce a set of observation sequences. In particular, we generated $A$ as
$A = I + \Delta A$,
where $\Delta A$ is a matrix whose entries are independent and uniformly distributed between $[-0.5, 0.5]$. The elements of $C$ were drawn from independent standard normal distributions. We obtained 50 trajectories from the generated system by providing different initial states, and each trajectory consisted of 50 observations. 

For training a linear model on this data set, we used the stochastic gradient method with momentum. Both for the mean-squared-error loss and for the time-weighted logarithmic loss, the gradients were normalized to unit norm if their $\ell_2$ norm exceeded 1. Figure \ref{fig:training-error} shows a typical plot for the training error of an unstable system for each of these loss functions. We observe that the gradient descent algorithm is not able to decrease the mean-squared error loss, whereas the time-weighted logarithmic loss function is diminished easily.

\begin{figure}[ht]
    \centering
    \includegraphics[scale=0.45]{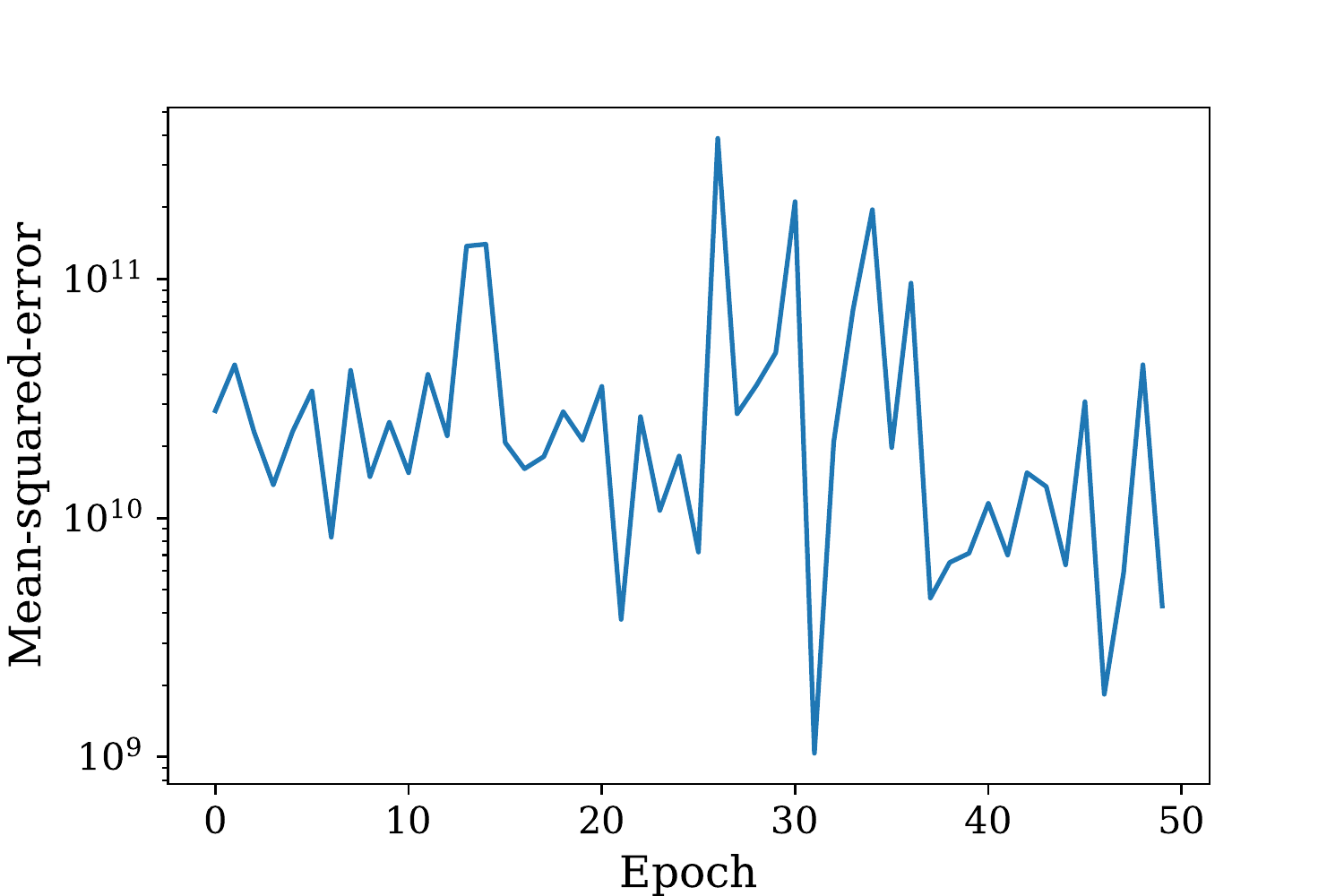}
    \includegraphics[scale=0.45]{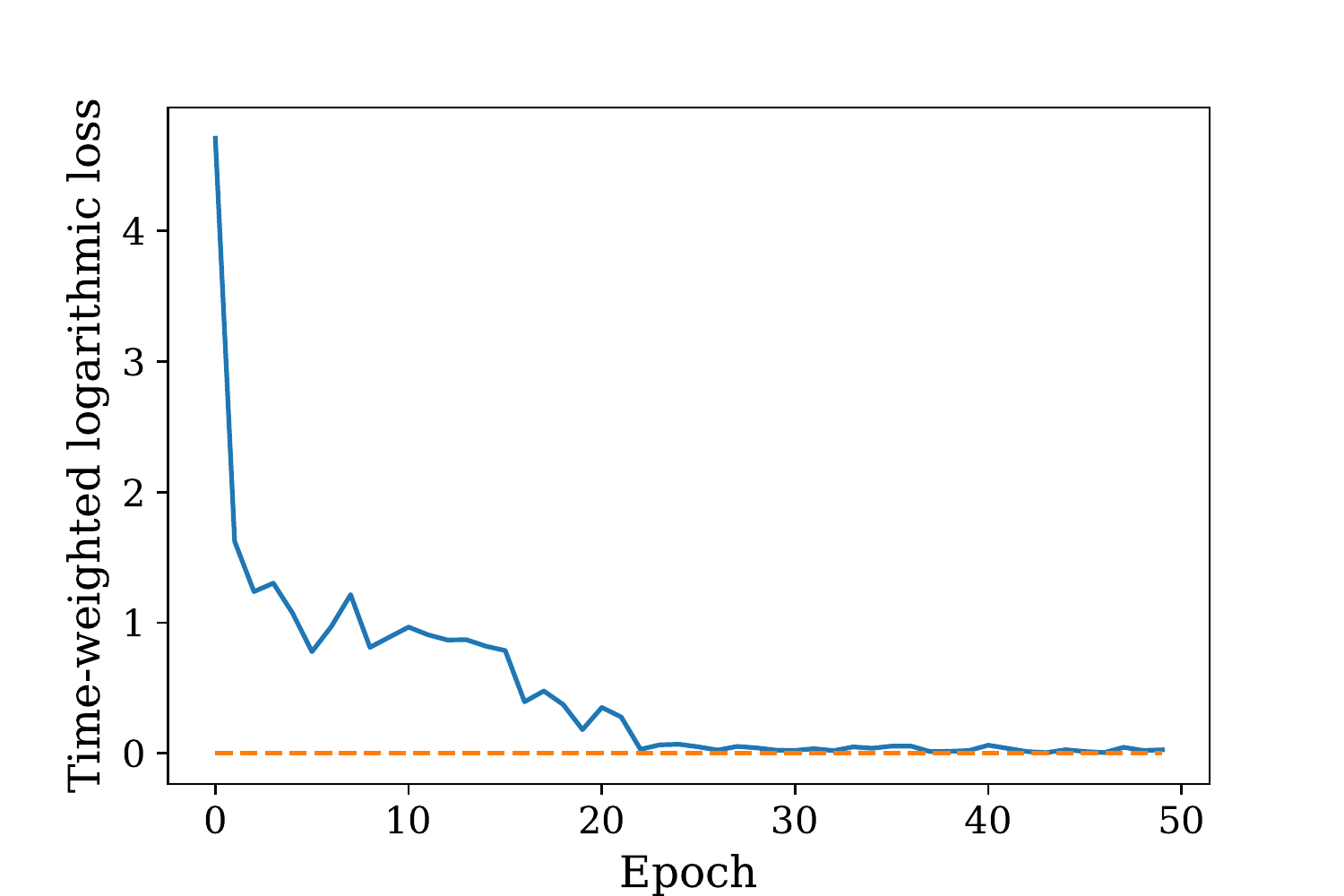}
    \caption{Typical plots of training error when mean-squared-error is used [left] and when time-weighted logarithmic loss function is used [right].}
    \label{fig:training-error}
\end{figure}

To check if this decrease in the loss function corresponds to an effective learning of the actual model, we computed the eigenvalues of the estimated system throughout training and compared them with the eigenvalues of the actual system.
Figure \ref{fig:seed5}
demonstrates an example of how the estimates for the eigenvalues evolve during training. The state space of the system in Figure \ref{fig:seed5} is three dimensional, and the system is unstable as one of its eigenvalues is outside of the unit circle. When the mean-squared-error loss is used, only the unstable mode of the system is estimated correctly. In contrast, the time-weighted logarithmic loss function is able to discover all three modes of the system. Additional plots are provided in Appendix \ref{sec:add-experiment}.

\begin{figure}[ht]
\centering
\subfigure[Eigenvalues with mean-squared-error]{
\includegraphics[trim={2cm 0 2cm 0}, clip, scale=0.6]{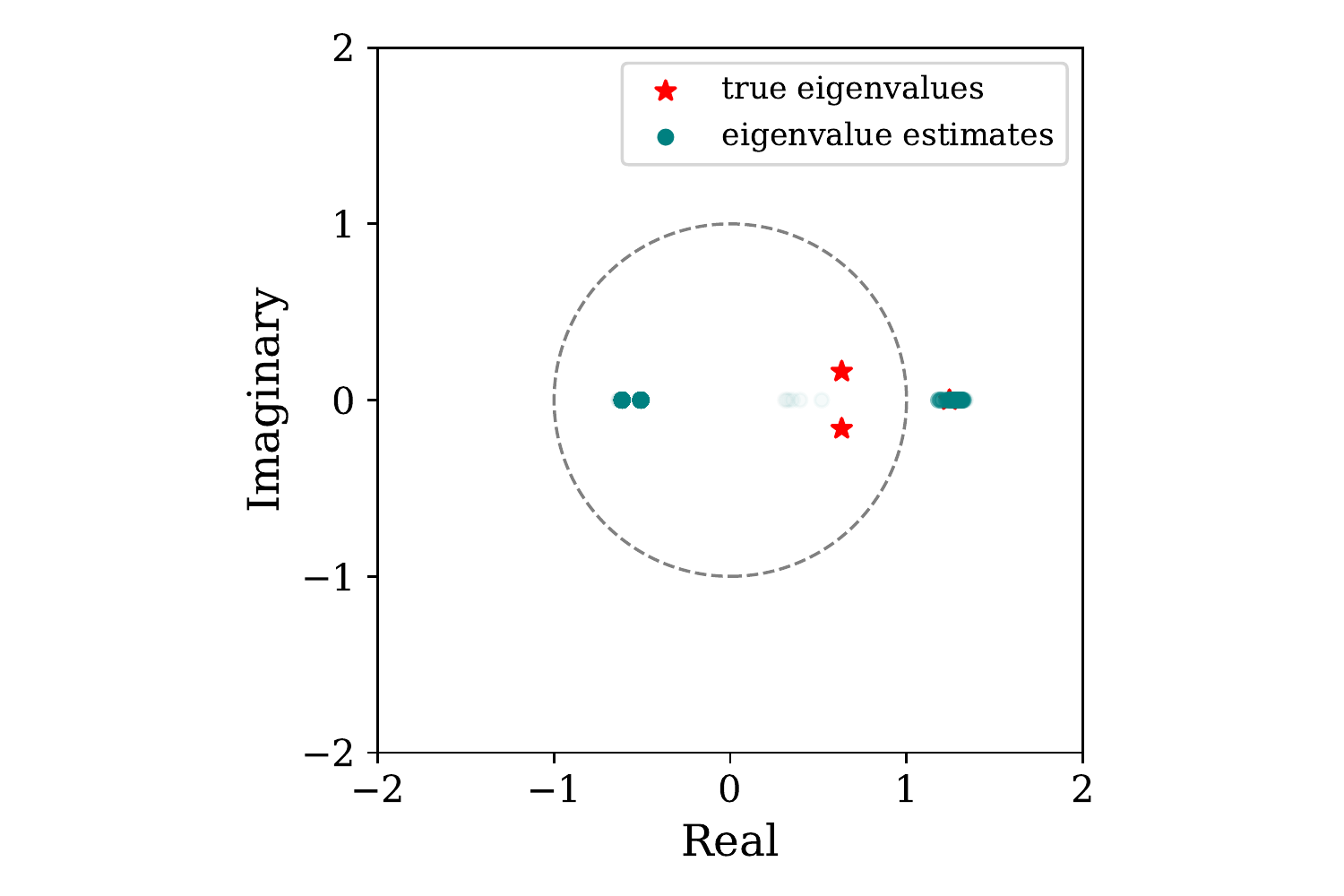}
}
\subfigure[Eigenvalues with logarithmic loss]{
\includegraphics[trim={2cm 0 2cm 0}, clip, scale=0.6]{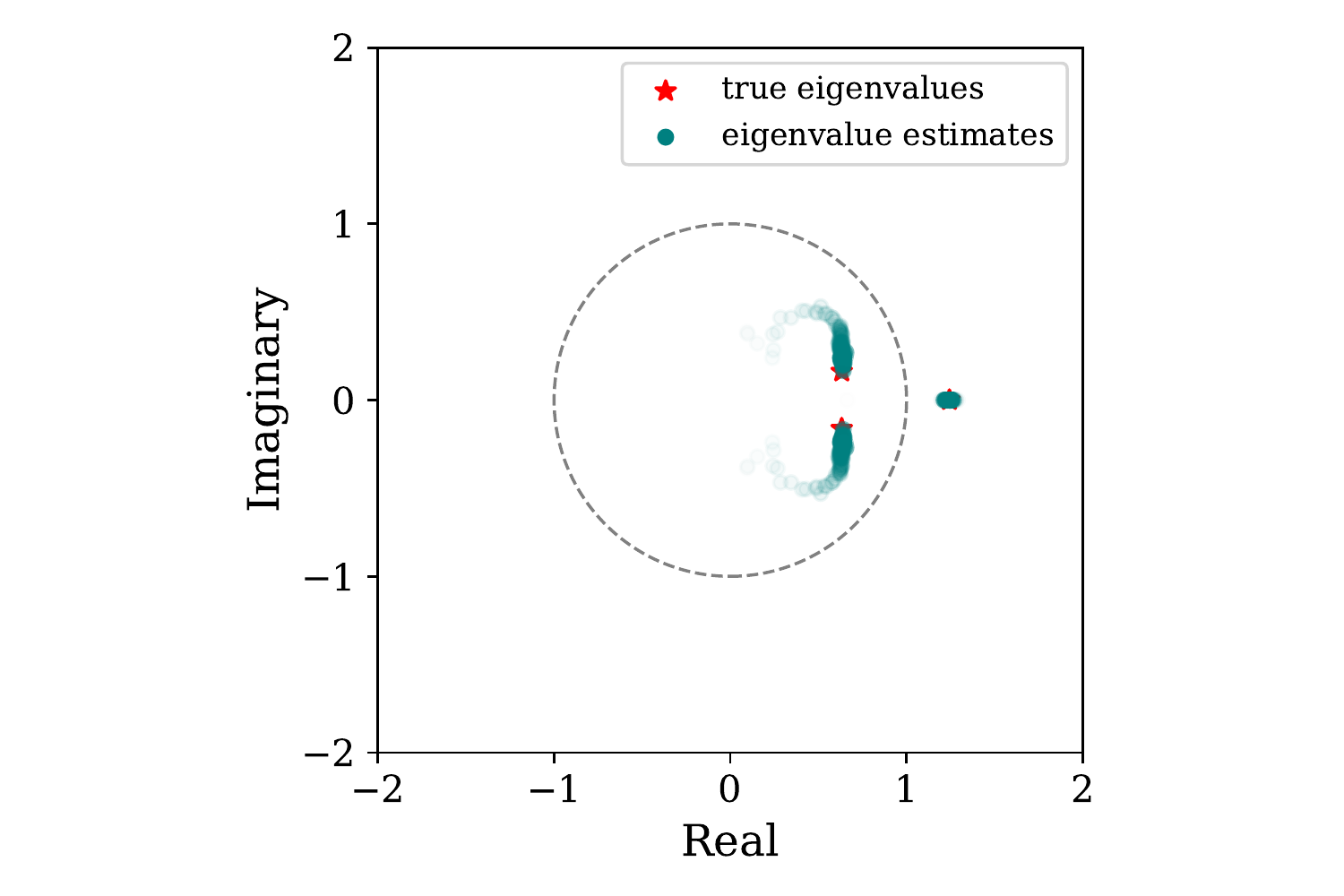}
}
\caption{A linear system with three-dimensional state space is trained with mean-squared-error loss [left] and time-weighted logarithmic loss [right]. The red stars show the eigenvalues of the real system, whereas the green dots show the eigenvalues of the estimated system. Earlier estimates of the eigenvalues are depicted with faded colors. Mean-squared-error loss is able to find only the unstable mode, whereas the logarithmic loss function discovers all three modes correctly.}
\label{fig:seed5}
\end{figure}


\section{Discussion}
\label{sec:discussion}

\noindent \textbf{Variational inference.} Variational inference is a Bayesian approach to handle the unknown parameters and the unobserved states of a dynamical system simultaneously \citep{jordan1999introduction, archer2015black}. For variational inference, the system is described by a generative model: $p_\theta(x, z)$, where $x = \{x(t)\}_{t \in \mathcal T}$ and $z = \{z(t)\}_{t \in \mathcal T}$ are the sequence of observations and hidden states of the system, and $\theta$ is the parameters of the model. Given the observations, the posterior is approximated by another model: $g_\phi(\cdot | x)$. Then, the objective function to be minimized is described as \citep{archer2015black}
\begin{equation}
\label{loss:variational}
- \mathcal H(g_\phi(\textbf{z} | x)) - {\mathbb E}_{g_\phi(\textbf{z}|x)}[\log(p_\theta(x,\textbf{z})], \end{equation}
where $\mathcal H$ is the entropy of its argument. Assume the stochasticity of the initial state and the state transitions is removed, and each observation $x(t)$ is obtained through an observation mapping with an additive Gaussian noise:
\[ x(t) =  c(z(t)) + \xi_t,\]
where $\{\xi_t\}_{t \in \mathcal T}$ is an independent and identically distributed sequence. Then the minimization of the loss function (\ref{loss:variational}) reduces to the problem
\begin{align*}
\underset{\theta, \phi}{\text{minimize}} & \quad \sum\nolimits_{t \in \mathcal T} {\left\| x(t) - c(z(t)) \right\|}_2^2 \\
\text{subject to} & \quad z(0) = \argmax_{\tilde z} g_\phi(\tilde z | x),
\end{align*}
and the system identification problem becomes equivalent to problem (\ref{model:third}). This is the reason why we referred to (\ref{model:third}) as the deterministic counterpart of the variational inference formulation.

\noindent \textbf{Random initial states.} In our analyses, we treated the initial states as unknown but deterministic values that could be learned during training. With this deterministic viewpoint, Theorem~\ref{theorem:step-size} and Theorem~\ref{second-theorem} assumed that the observations could be matched to the latent state of the system perfectly and the training loss function could be made identically zero. It is not possible in general to satisfy this requirement with an expected loss over a set of random initial states. Therefore, Theorem~\ref{theorem:step-size} and Theorem~\ref{second-theorem} do not apply to the formulations with random initial states verbatim.

\noindent \textbf{Convergence of policy gradient.} Even though the focus of this work has been on system identification, the gradient descent algorithm will exhibit similar convergence problems when maximizing an objective over a time horizon while altering the dynamics of a dynamical system. Note that policy gradient methods in reinforcement learning \citep{sutton2018reinforcement} fall into this category. This is why our analysis in this work can potentially be used for studying and improving the stability of policy gradient methods.

\section{Conclusion}

To understand the hardness of learning dynamical systems from observed trajectories, we analyzed the dynamics of the gradient descent algorithm while training the parameters of a dynamical model, and we observed that samples taken at different times affect the dynamics of the algorithm in substantially different degrees. To balance the effect of samples taken at different times, we introduced the time-weighted logarithmic loss function and demonstrated its effectiveness.

In this work, we focused on learning linear dynamical systems. Whether a similar loss function improves training of nonlinear models is an important direction for future research. In addition, we considered a deterministic framework for our problem formulation with a dynamical system. An interesting question is whether allowing randomness in the state of the system or the state transitions could trade off the accuracy of the estimated model for the efficiency of the training procedure. 



\bibliography{references}

\appendix

\section{Proof of Theorem \ref{theorem:step-size}}
\label{appendix:proof-theorem-step-size}
To begin with, assume that $C$ is a fixed matrix, and consider only one trajectory $z$ with only one sample taken at time $t$. 
Then the loss function to be minimized is
\[ \ell(A,s) = \frac{1}{2} \|x - C e^{At}s \|_2^2, \]
where $s$ denotes the initial state of the trajectory.
The update rule for the gradient descent algorithm gives
\begin{subequations}
\label{eqn:grad-desc}
\begin{align}
A & \leftarrow A - \frac{\delta}{2}\frac{\partial }{\partial A} \langle C e^{At}s -x , C e^{At}s - x \rangle \\
s & \leftarrow s - \frac{\delta}{2}\frac{\partial }{\partial s} \langle C  e^{At}s -x , C e^{At}s - x \rangle
\end{align}
\end{subequations}
This update rule creates a nonlinear dynamical system where the state of the system is the parameters $(A, s)$.

A dynamical system can converge to its equilibrium only if that equilibrium is stable in the sense of Lyapunov.
A standard tool to analyze the stability for nonlinear systems is given by Lyapunov's direct method: an equilibrium of a nonlinear system can be stable only if the linearization of the system around that equilibrium has no unstable mode \citep{khalil}. If, on the other hand, the linearized model has an eigenvalue larger than 1 in magnitude, then the nonlinear system is definitely unstable --- which rules out the possibility of convergence to this equilibrium from its neighbors, except for a set on a low-dimensional manifold, which has Lebesgue measure zero.
This shows that the system (\ref{eqn:grad-desc}) can converge to an equilibrium only if all eigenvalues of the linearized model around that equilibrium are less than 1 in magnitude.

We can write the linearization of (\ref{eqn:grad-desc}) around an equilibrium $(\hat A, \hat s)$ as
\begin{align*}
    \tilde A & \gets \tilde A - \delta f_1(\tilde A) - \delta f_2(\tilde s), \\
    \tilde s & \gets \tilde s - \delta f_3(\tilde A) - \delta f_4(\tilde s),
\end{align*}
where
\begin{itemize}
    \item $f_1$ is the Jacobian with respect to A of the gradient with respect to A of the loss function $\ell$ at $(\hat A, \hat s)$,
    \item $f_2$ is the Jacobian with respect to s of the gradient with respect to A of the loss function $\ell$ at $(\hat A, \hat s)$,
\end{itemize}
and $f_3$ and $f_4$ are defined similarly. Note that $f_2$ and $f_3$ are the Jacobians of the gradients of the same function with respect to the same parameters in different orders; therefore, they are hermitian of each other:
\[ \langle \tilde A, f_2(\tilde s) \rangle = \langle f_3(\tilde A) , \tilde s\rangle \quad \forall \tilde A, \forall \tilde s. \]
This shows that the linearized model can be associated with a symmetric matrix; and consequently, all of its eigenvalues are real-valued, and its eigenvalues can be less than 1 only if all of its diagonal blocks have eigenvalues less than 1. In other words, a necessary condition for the solution $(\hat A, \hat s)$ to be stable is that the mappings
\begin{align}
     \tilde A & \gets \tilde A - \delta f_1(\tilde A) \label{update:A} \\
   \tilde  s &  \gets \tilde s - \delta f_4(\tilde s) \label{update:s}
\end{align}
have eigenvalues less than 1 in magnitude, or equivalently, the functions $f_1$ and $f_4$ have eigenvalues less than $2/\delta$. Note that this conclusion would be identical if $C$ was also updated via the gradient descent algorithm. In particular, we would need the eigenvalue of the mapping $f_1$ to be less than 1 in magnitude around the equilibrium $(\hat A, \hat C, \{\hat s_k\}_{k \in \mathcal K})$.


Finding a lower bound for the largest eigenvalue of the mapping $f_1$ will be easier with the following lemma.

\begin{lemma}
\label{first-lemma}
Let $f_i : \mathbb R^n \to \mathbb R$ be a twice-differentiable function for all $i \in \mathcal I$, and define
\[ F(x) = \frac{1}{2} \sum_{i \in \mathcal I} f_i^2(x). \]
If $F(x_0) = 0$, then the Hessian of $F$ at $x_0$ satisfies
\[ \nabla^2 F(x_0) = \sum_{i \in \mathcal I} \nabla f_i(x_0) \nabla f_i(x_0)^\top.
\]
\end{lemma}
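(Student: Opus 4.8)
The plan is to compute $\nabla^2 F(x_0)$ directly by differentiating twice and then to exploit the hypothesis $F(x_0)=0$ in order to discard the terms that survive only when the residuals $f_i(x_0)$ are nonzero. First I would apply the product rule to obtain the gradient $\nabla F(x) = \sum_{i \in \mathcal I} f_i(x)\nabla f_i(x)$, which is well-defined because each $f_i$ is twice differentiable, so in particular each $\nabla f_i$ exists and is itself differentiable.

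Next I would differentiate once more. Treating each summand $f_i(x)\nabla f_i(x)$ as a product of the scalar $f_i(x)$ and the vector field $\nabla f_i(x)$, the product rule gives $\nabla^2 F(x) = \sum_{i \in \mathcal I}\bigl(\nabla f_i(x)\nabla f_i(x)^\top + f_i(x)\nabla^2 f_i(x)\bigr)$. Here the outer-product term $\nabla f_i(x)\nabla f_i(x)^\top$ comes from differentiating the scalar factor, and the scaled-Hessian term $f_i(x)\nabla^2 f_i(x)$ comes from differentiating the vector factor.

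The crux of the argument --- and the only place the hypothesis is used --- is the observation that $F(x_0)=0$ means $\tfrac12\sum_{i \in \mathcal I} f_i^2(x_0)=0$; since this is a sum of nonnegative terms, each individual $f_i(x_0)$ must vanish. Consequently every term $f_i(x_0)\nabla^2 f_i(x_0)$ drops out when we evaluate at $x_0$, leaving precisely $\nabla^2 F(x_0)=\sum_{i \in \mathcal I}\nabla f_i(x_0)\nabla f_i(x_0)^\top$, as claimed.

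Because the computation is elementary, I do not expect a genuine obstacle; the two points worth flagging are that twice-differentiability is exactly what licenses the product-rule expansion (guaranteeing the $\nabla^2 f_i$ exist), and that the vanishing of the residuals at $x_0$ is what upgrades the usual Gauss--Newton form of the Hessian from an approximation to an exact identity.
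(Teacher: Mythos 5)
Your proposal is correct and follows essentially the same route as the paper: differentiate twice via the product rule to get $\nabla^2 F = \sum_i \bigl(\nabla f_i \nabla f_i^\top + f_i \nabla^2 f_i\bigr)$, then use the fact that $F(x_0)=0$ forces each $f_i(x_0)=0$ to kill the scaled-Hessian terms. Your explicit remark that the vanishing of a sum of nonnegative terms implies each residual vanishes is the only detail the paper leaves implicit.
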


\begin{proof} We can write the gradient and the Hessian of $F$, respectively, as
\begin{align*} \nabla F(x_0) & = \sum_{i \in \mathcal I} (\nabla f_i(x_0)) f_i(x_0),\\
\nabla^2 F(x_0) & = \sum_{i \in \mathcal I} \nabla f_i(x_0) \nabla f_i(x_0)^\top + f_i(x_0) \cdot \nabla^2 f_i(x_0).\end{align*}
Note that $F(x_0)=0$ implies that $f_i(x_0) = 0$ for all $i \in \mathcal I$. Then we have
\[ \nabla^2 F(x_0) = \sum_{i \in \mathcal I} \nabla f_i(x_0) \nabla f_i(x_0)^\top.\]
\end{proof}

Remember that $f_1(A)$ is the Jacobian with respect A of the gradient with respect to $A$ of the loss function
\[ \ell(A, C, s) = \frac{1}{2} \left\langle C e^{At}s -x,\, C e^{At}s - x \right\rangle. \]
Given $A \in \mathbb R^{n \times n}$, we can write
\[ \ell(A, C, s) = \frac{1}{2} \sum_{j =1}^n \left( e_j^\top C e^{At}s - e_j^\top x \right)^2, 
\]
where $e_j$ is the $j$-th standard basis vector with a 1 in its $j$-th coordinate and 0 in all other coordinates. Then, by using Lemma \ref{first-lemma}, the largest eigenvalue of the mapping $f_1$ can be lower bounded by
\begin{align}
\label{eqn:lower-bound-1}
\max_{Y : {\|Y\|}_F = 1} \ \sum_{j=1}^n \left|\left\langle Y, \nabla_A(e_j^\top C e^{At}s - e_j^\top x ) \right\rangle\right|^2.
\end{align}
To find the gradient, we can expand the matrix exponential:
\begin{align*}
    \nabla_A \left( e_j^\top C  \sum_{k=0}^\infty \frac{t^k}{k!}A^k s \right) = 
    \sum_{k=1}^{\infty} \sum_{r=0}^{k-1} \frac{t^k}{k!}{(A^\top)}^r C^\top e_js^\top {(A^\top)}^{k-1-r}.
\end{align*}
If we choose $\tilde Y = uv^\top$, where $u$ and $v$ are the unit-norm right and left eigenvectors of $A$ corresponding to its eigenvalue $\Lambda$ with the largest real part, we obtain
\begin{align*}
 \left\langle \tilde Y, \nabla_A \left( e_j^\top C \sum_{k=0}^\infty \frac{t^k}{k!}A^k s \right) \right\rangle & =
 \sum_{k=1}^\infty \sum_{r=0}^{k-1} \frac{t^k}{k!} \Lambda^{k-1} \langle u, C^\top e_j \rangle \langle v, s\rangle \\
 & = \sum_{k=1}^\infty \frac{t^k}{(k-1)!} \Lambda^{k-1} \langle u, C^\top e_j \rangle \langle v, s\rangle \\
 & = t e^{\Lambda t} \langle u, C^\top e_j \rangle \langle v, s\rangle \\
 & = t e^{\Lambda t} \langle Cu, e_j \rangle \langle v, s \rangle.
\end{align*}
Remember that (\ref{eqn:lower-bound-1}) is a lower bound for the largest eigenvalue of $f_1$, and so is
\begin{align*}
\sum_{j=1}^n {\left| \left\langle \tilde Y, \nabla_A(e_j^\top C e^{At}s - e_j^\top x ) \right\rangle \right|}^2 & =  \sum_{j=1}^n t^2 e^{2 \text{Re}(\Lambda) t} \left|\langle Cu,e_j \rangle \right|^2 \left|\langle v, s\rangle\right|^2 \\
& =  \rho^2 t^2 e^{2 \text{Re}(\Lambda) t} \left|\langle v, s\rangle\right|^2,
\end{align*}
where $\text{Re}(\Lambda)$ is the largest real part of the eigenvalues of $A$ and $\rho^2 = \|C u\|_2^2$.
If we have multiple trajectories, this lower bound will become
\[ \sum_{k \in \mathcal K} \sum_{t \in \mathcal T_k} \rho^2 t^2 e^{2 \text{Re}(\Lambda) t} \left|\langle v, s_k\rangle\right|^2, 
\]
where $\{s_k\}_{k \in \mathcal K}$ is the set of initial states of the trajectories.

As a result, for convergence of the gradient descent algorithm to a solution $(\hat A, \hat C, \hat s)$, it is necessary that
\[ \sum_{k \in \mathcal K} \sum_{t \in \mathcal T_k} \rho^2 t^2 e^{2\text{Re}(\Lambda) t} \left|\langle v, \hat s_k\rangle\right|^2 \le \frac{2}{\delta}.
\]
Without making any assumptions about the eigenvectors of $\hat A$, we can obtain the final necessary condition as
\[ \lambda_\text{min} \left( \sum_{k \in \mathcal K} \sum_{t \in \mathcal T_k} \rho^2 t^2 e^{2 \text{Re}(\Lambda) t} \hat s_k \hat s_k^\top \right) \le \frac{2}{\delta}, \]
or equivalently as
\[ \delta \le \frac{2}{\lambda_\text{min} \left( \rho^2  \sum_{k \in \mathcal K} \sum_{t \in \mathcal T_k} t^2 e^{2 \text{Re}(\Lambda) t} \hat s_k \hat s_k^\top \right)}. \]
This completes the proof. \hfill $\square$
    
\section{Proof of Theorem \ref{second-theorem}}
\label{appendix:proof-adapted-loss}
Similar to the proof of Theorem 1, we will use Lemma \ref{first-lemma} to find a lower bound for the largest eigenvalue of the linearized system around $(\hat A, \hat C,  \{\hat s_k\}_{k \in \mathcal K})$. Without loss of generality, assume $e_j C e^{At} s > 0$. Then,
\begin{align*} \nabla_A \log\left( e_j^\top C e^{At}s + \epsilon \right) &
= \nabla_A \log\left( e_j^\top C  \sum_{k=0}^\infty \frac{t^k}{k!}A^k s + \epsilon \right) \\
& = \frac{1}{ e_j^\top C e^{At}s + \epsilon} \sum_{k=1}^\infty \sum_{r=0}^{k-1} \frac{t^k}{k!} (A^\top)^r C^\top e_j s^\top (A^\top)^{k-1-r}.
\end{align*}
For the matrix $\tilde Y = uv^\top$, where $u$ and $v$ are the right and left eigenvectors of $A$ corresponding to its eigenvalue $\Lambda$ with the largest real part, we have
\begin{align*}
    \left\langle \tilde Y, \nabla_A \log\left( e_j^\top C e^{At}s + \epsilon \right) \right\rangle & = \frac{1}{ e_j^\top C e^{At}s + \epsilon} \sum_{k=1}^\infty \frac{t^k}{(k-1)!} \Lambda^{k-1} \langle u, C^\top e_j \rangle \langle v, s\rangle \\
    & = \frac{te^{\Lambda t}}{e_j^\top C e^{At} s + \epsilon}  \langle u, C^\top e_j \rangle \langle v, s\rangle. 
\end{align*}
By using Lemma \ref{first-lemma}, we obtain a lower bound for the largest eigenvalue of the linearization of the gradient descent algorithm around $(\hat A, \hat C,  \{\hat s_k\}_{k\in \mathcal K})$ as
\[
\sum_{k \in \mathcal K} \sum_{t \in \mathcal T_k} \sum_{j=1}^n \frac{1}{t^2} \left| \frac{te^{\Lambda t}}{e_j^\top C e^{At} s_k + \epsilon}  \langle C u, e_j \rangle \langle v, s_k\rangle \right|^2.
\]
We can write a further lower bound for this expression as
\begin{align*}
    \sum_{k \in \mathcal K} \sum_{t \in \mathcal T_k} \sum_{j =1}^n \frac{e^{2 \text{Re}(\Lambda) t}}{ \left(  {\| C e^{At}s_k\|}_\infty + \epsilon\right)^2} \left| \langle C u, e_j \rangle\right|^2 \left|\langle v, s_k\rangle\right|^2
    \\ = \sum_{k \in \mathcal K} \sum_{t \in \mathcal T_k} \frac{\rho^2 e^{2 \text{Re}(\Lambda) t}}{ \left( {\|C e^{At} s_k\|}_\infty + \epsilon\right)^2} \left|\langle v, s_k\rangle\right|^2,
\end{align*}
and finally,
\[ \lambda_\text{min} \left( \sum_{k \in \mathcal K} \sum_{t \in \mathcal T_k} \frac{\rho^2 e^{2\text{Re}(\Lambda) t}}{ \left(  {\|Ce^{At}s_k\|}_\infty + \epsilon\right)^2} s_k s_k^\top \right),\]
where $\rho^2 = \|\hat C u\|_2^2$ and $u$ is the right-eigenvector of $\hat A$ corresponding to its eigenvalue $\Lambda$.
For stability of the algorithm around the equilibrium point $(\hat A, \{\hat s_k\}_{k \in \mathcal K})$, we need
\[ \lambda_\text{min} \left( \sum_{k \in \mathcal K} \sum_{t \in \mathcal T_k} \frac{\rho^2 e^{2 \text{Re}(\Lambda) t}}{ \left(  {\|\hat C e^{\hat At} \hat s_k\|}_\infty + \epsilon\right)^2} \hat s_k \hat s_k^\top \right) \le \frac{2}{\delta},\]
where $\delta$ is the step size of the algorithm.

\section{Remarks on Convergence Rate}
\label{app:conv-rate}

In the proof of Theorem \ref{theorem:step-size}, we considered the mapping 
\[ \tilde A \gets \tilde A - \delta f_1(\tilde A), \]
where $f_1$ is the Jacobian of the gradient of the loss function
\[ \ell( A, s) = \frac{1}{2} \|x - C e^{At} s\|_2^2 \]
with respect to $A$ at the point $(\hat A, \hat C, \hat s)$. For Theorem \ref{theorem:step-size}, we computed the largest learning rate at which the algorithm can still converge to the specified equilibrium. Note that this was equivalent to computing a lower bound for the largest eigenvalue of the mapping $f_1$. Similar to the proof of Theorem \ref{theorem:step-size}, we can compute an upper bound for the smallest eigenvalue of $f_1$ around the solution $(\hat A, \hat C, \hat s)$.


By using Lemma \ref{first-lemma}, the smallest eigenvalue of the mapping $f_1$ can be upper bounded by
\begin{align}
\label{eqn:upper-bound-1}
\min_{Y : {\|Y\|}_F = 1} \ \sum_{j=1}^n \left|\left\langle Y, \nabla_A(e_j^\top C e^{At}s - e_j^\top x ) \right\rangle\right|^2.
\end{align}
Similar to the proof of Theorem \ref{theorem:step-size}, we can expand the matrix exponential:
\begin{align*}
    \nabla_A \left( e_j^\top C  \sum_{k=0}^\infty \frac{t^k}{k!}A^k s \right) = 
    \sum_{k=1}^{\infty} \sum_{r=0}^{k-1} \frac{t^k}{k!}{(A^\top)}^r C^\top e_js^\top {(A^\top)}^{k-1-r}.
\end{align*}
If we choose $\tilde Y = uv^\top$, where $u$ and $v$ are the unit-norm right and left eigenvectors of $A$ corresponding to its eigenvalue $\lambda_2$, we obtain
\begin{align*}
 \left\langle \tilde Y, \nabla_A \left( e_j^\top C \sum_{k=0}^\infty \frac{t^k}{k!}A^k s \right) \right\rangle & =
 \sum_{k=1}^\infty \sum_{r=0}^{k-1} \frac{t^k}{k!} \lambda_2^{k-1} \langle u, C^\top e_j \rangle \langle v, s\rangle \\
 & = \sum_{k=1}^\infty \frac{t^k}{(k-1)!} \lambda_2^{k-1} \langle u, C^\top e_j \rangle \langle v, s\rangle \\
 & = t e^{\lambda_2 t} \langle u, C^\top e_j \rangle \langle v, s\rangle \\
 & = t e^{\lambda_2 t} \langle Cu, e_j \rangle \langle v, s \rangle.
\end{align*}
Remember that (\ref{eqn:upper-bound-1}) is an upper bound for the smallest eigenvalue of $f_1$, and so is
\begin{align*}
\sum_{j=1}^n {\left| \left\langle \tilde Y, \nabla_A(e_j^\top C e^{At}s - e_j^\top x ) \right\rangle \right|}^2 & =  \sum_{j=1}^n t^2 e^{2 \text{Re}(\lambda_2) t} \left|\langle Cu,e_j \rangle \right|^2 \left|\langle v, s\rangle\right|^2 \\
& =  \rho^2 t^2 e^{2 \text{Re}(\lambda_2) t} \left|\langle v, s\rangle\right|^2,
\end{align*}
where $\rho^2 = \|C u\|_2^2$.
If we have multiple trajectories, this upper bound will become
\[ \sum_{k \in \mathcal K} \sum_{t \in \mathcal T_k} \rho^2 t^2 e^{2 \text{Re}(\lambda_2) t} \left|\langle v, s_k\rangle\right|^2, 
\]
where $\{s_k\}_{k \in \mathcal K}$ is the set of initial states of the trajectories. We can bring this upper bound into a form independent of $v$:
\[ \lambda_{\max} \left(\sum_{k \in \mathcal K} \sum_{t \in \mathcal T_k} \rho^2 t^2 e^{2 \text{Re}(\lambda_2) t} s_k s_k^\top \right).
\]

This shows that the ratio of the largest eigenvalue  to the smallest eigenvalue of $f_1$ satisfies
\[ \frac{\lambda_{\max}(f_1)}{\lambda_{\min}(f_1)} \ge \frac{
\lambda_{\min} \left( \rho_1^2 \sum_{k \in \mathcal K} \sum_{t \in \mathcal T_k} t^2 e^{2 \text{\emph{Re}}(\lambda_1) t} \hat s_k \hat s_k^\top \right)
}{
\lambda_{\max} \left( \rho_2^2 \sum_{k \in \mathcal K} \sum_{t \in \mathcal T_k} t^2 e^{2 \text{\emph{Re}}(\lambda_2) t} \hat s_k \hat s_k^\top \right)
}
\]
for any pair of eigenvalues $(\lambda_1, \lambda_2)$ of $\hat A$, where $\rho_1 = \|C u_1\|_2$, $\rho_2 = \|C u_2\|_2$, and $u_1$, $u_2$ are the right eigenvectors of $\hat A$ corresponding to $\lambda_1$, $\lambda_2$. 
If $H$ denotes the Hessian of the loss function $\ell$ at the point $(\hat A, \hat C, \{\hat s_k\}_{k \in \mathcal K})$, we have $\lambda_{\max}(H) \ge \lambda_{\max}(f_1)$ and $\lambda_{\min}(H) \le \lambda_{\min}(f_1)$. Therefore, we also have
\begin{equation} \frac{\lambda_{\max}(H)}{\lambda_{\min}(H)} \ge \frac{
\lambda_{\min} \left( \rho_1^2 \sum_{k \in \mathcal K} \sum_{t \in \mathcal T_k} t^2 e^{2 \text{\emph{Re}}(\lambda_1) t} \hat s_k \hat s_k^\top \right)
}{
\lambda_{\max} \left( \rho_2^2 \sum_{k \in \mathcal K} \sum_{t \in \mathcal T_k} t^2 e^{2 \text{\emph{Re}}(\lambda_2) t} \hat s_k \hat s_k^\top \right)
}.
\label{this:equation}
\end{equation}

To understand the relationship of (\ref{this:equation}) to the convergence rate, consider a quadratic function $h: \mathbb R^n \mapsto \mathbb R$ defined as
\[ h(w) = \frac{1}{2}(w - w^*)^\top H (w - w^*), \]
where $H$ is the Hessian of $h$ and $w^*$ is the point where $h$ attains its minimum. For the gradient descent algorithm
\[ w \gets w - \delta H(w - w^*) \]
to converge to the minimum of $h$ from arbitrary initializations, we need the learning rate $\delta$ to be smaller than $\frac{2}{\lambda_{\max}(H)}$. Assume $(w_0- w^*)$, where $w_0$ is the initial point where the algorithm starts, is in the direction of the eigenvector of $H$ corresponding to its minimum eigenvalue. In other words,
\[ H(w_0 - w^*) = \lambda_{\min}(H) (w_0 - w^*). \]
Then the iterations of the gradient descent algorithm becomes
\begin{align*} (w_k-w^*) & \gets (w_{k-1}-w^*) - \delta H(w_{k-1} - w^*) \\
& \gets (w_{k-1} - w^*) - \delta \lambda_{\min}(H) (w_{k-1}- w^*) \\
& \gets ( 1 - \delta \lambda_{\min}(H) )(w_{k-1} - w^*) \\
& \gets ( 1 -  \delta \lambda_{\min}(H) )^{k}(w_0 - w^*). \end{align*}
Attaining $\|w_k - w^*\|_2 \le \epsilon$ for any $\epsilon > 0$ will require
\[ (1- \delta \lambda_{\min}(H) )^{k}\|w_0 - w^*\|_2 \le \epsilon  \implies k \log(1- \delta \lambda_{\min}(H) ) + \log(\|w_0 - w^*\|_2) \le \log(\epsilon),\]
which gives a lower bound for the number of iterations needed:
\[k \ge \frac{1}{\log\left( \frac{1}{1 - \delta \lambda_{\min}(H)} \right)}\left( \log\left(\frac{1}{\epsilon}\right) + \log(\|w_0 - w^*\|_2) \right).
\]
As a result, convergence of the gradient descent algorithm to the minimum of $h$ in the direction of the bottom eigenvector of $H$ requires
\begin{equation}
\label{that:equation} 
O \left( \left[ \log\left( 
(1 - \delta \lambda_{\min}(H) )^{-1}
\right)
\right]^{-1}
\right) 
\end{equation}
iterations. Remember that for convergence of the algorithm, we require $\delta < \frac{2}{\lambda_{\max}(H)}$; therefore, $\delta \lambda_{\min}(H) < 2 \frac{\lambda_{\min}(H)}{\lambda_{\max}(H)}$. Combining (\ref{this:equation}) and (\ref{that:equation}) gives the local convergence rate for the loss function $\ell$, if we assume the second approximation of $\ell$ represents it well around $(\hat A, \hat C, \{\hat s_k\}_{k \in \mathcal K})$.

\section{Alternatives for Initial States}
\label{app:initial}
For the proof of Theorem \ref{theorem:step-size} and Theorem \ref{second-theorem}, we considered the loss function
\[ 
\ell(A, C, s) = \frac{1}{2} \sum_{t \in \mathcal T} \|x(t) - C e^{At} s\|_2^2,
\]
and analyzed the linearization of the dynamics of the gradient descent algorithm around the solution $(\hat A, \hat C, \hat s)$:
\begin{subequations}
\label{sys:big}
\begin{align}
    \tilde A & \gets f_{1,1}(\tilde A) +  f_{1, 2}(\tilde C) + f_{1,3}(\tilde s) \\
    \tilde C & \gets f_{2,1}(\tilde A) +  f_{2, 2}(\tilde C) + f_{2,3}(\tilde s) \\
    \tilde s & \gets f_{3,1}(\tilde A) +  f_{3, 2}(\tilde C) + f_{3,3}(\tilde s),
\end{align}
\end{subequations}
where $\{f_{i, j}\}_{i \in [3], j\in [3]}$ are the Jacobians of the partial derivatives of $\ell$ with respect to $A$, $C$ and $s$, evaluated at the point $(\hat A, \hat C, \hat s)$. We used the fact that system (\ref{sys:big}) can be represented by a symmetric matrix to use only the eigenvalues of $f_{1,1}$ in order to obtain a lower bound for the largest eigenvalue of the system (\ref{sys:big}).

Note that fixing the initial state $s$ and not updating it with the gradient descent algorithm will not affect the eigenvalues of $f_{1,1}$. Therefore, the results for Theorem \ref{theorem:step-size} and Theorem \ref{second-theorem}, which only depend on the largest eigenvalues of $f_{1, 1}$, will still hold when the initial state is fixed.

Now assume the initial state is obtained via a state estimator:
\[ s = g_{\phi}( \{t, x(t)\}_{t \in \mathcal T}),\]
where $\mathcal T$ is the set of sampling instants for the trajectory and $\{x_t\}_{t \in \mathcal T}$ is the set of samples obtained. While solving the problem
\begin{align*}
    \underset{A, C, \phi}{\text{minimize}} & \quad  \sum_{t \in \mathcal T} \ell \left( x(t), Ce^{At}g_\phi\left( \{t, x(t) \}_{t \in \mathcal T} \right) \right)
    + \mathcal L\left( \phi \right),
\end{align*}
the linear approximation to the gradient descent algorithm can be written as
\begin{subequations}
\label{sys:second-big}
\begin{align}
    \tilde A & \gets \hat f_{1,1}(\tilde A) +  \hat f_{1, 2}(\tilde C) + \hat f_{1,3}(\tilde \phi), \\
    \tilde C & \gets \hat f_{2,1}(\tilde A) +  \hat f_{2, 2}(\tilde C) + \hat f_{2,3}(\tilde \phi), \\
    \tilde \phi & \gets \hat f_{3,1}(\tilde A) +  \hat f_{3, 2}(\tilde C) + \hat f_{3,3}(\tilde \phi),
\end{align}
\end{subequations}
where $\{\hat f_{i, j}\}_{i \in [3], j\in [3]}$ are the Jacobians of the partial derivatives of $\ell$ with respect to $A$, $C$ and $\phi$, evaluated at the point $(\hat A, \hat C, \hat \phi)$.
Note that system (\ref{sys:second-big}) can still be represented by a symmetric matrix; therefore, the largest eigenvalues of $\hat f_{1, 1}$ can be used to obtain an upper bound on the learning rate of the algorithm. Furthermore, given that $\frac{\partial \mathcal L}{\partial A} = 0$, $f_{1, 1}$  in (\ref{sys:big}) and $\hat f_{1, 1}$ in (\ref{sys:second-big}) are identical, with the substitution $s = g_{\phi}\left( \{t, x(t) \}_{t \in \mathcal T} \right)$. For this reason, the results of Theorem \ref{theorem:step-size} and Theorem \ref{second-theorem} still hold for systems with a state estimator $g_{\phi}$, provided that the estimation error at equilibrium is zero; that is, 
\[ \sum_{t \in \mathcal T} \ell\left( x(t), \hat C e^{\hat A t} g_{\hat \phi}\left( \{t, x(t) \}_{t \in \mathcal T} \right) \right) = 0, \]
which is needed only to allow the use of Lemma \ref{first-lemma}.

\section{Additional Experiments}
\label{sec:add-experiment}

In this section, we provide additional experimental results to show that the comparison in Figure~\ref{fig:seed5} is not incidental. Figure \ref{fig:seed10} and Figure \ref{fig:seed20} demonstrate the comparison of the estimated eigenvalues for a different initialization and for a system with a four-dimensional state space, respectively.
Note that we were not able to enable the gradient descent algorithm to learn any of the eigenvalues correctly when the training loss is mean-squared error despite the fact that we used various learning rates for these experiments.

\begin{figure}[ht]
\centering
\subfigure[Eigenvalues with mean-squared-error]{
\includegraphics[trim={0cm 0 0cm 0}, clip, scale=0.65]{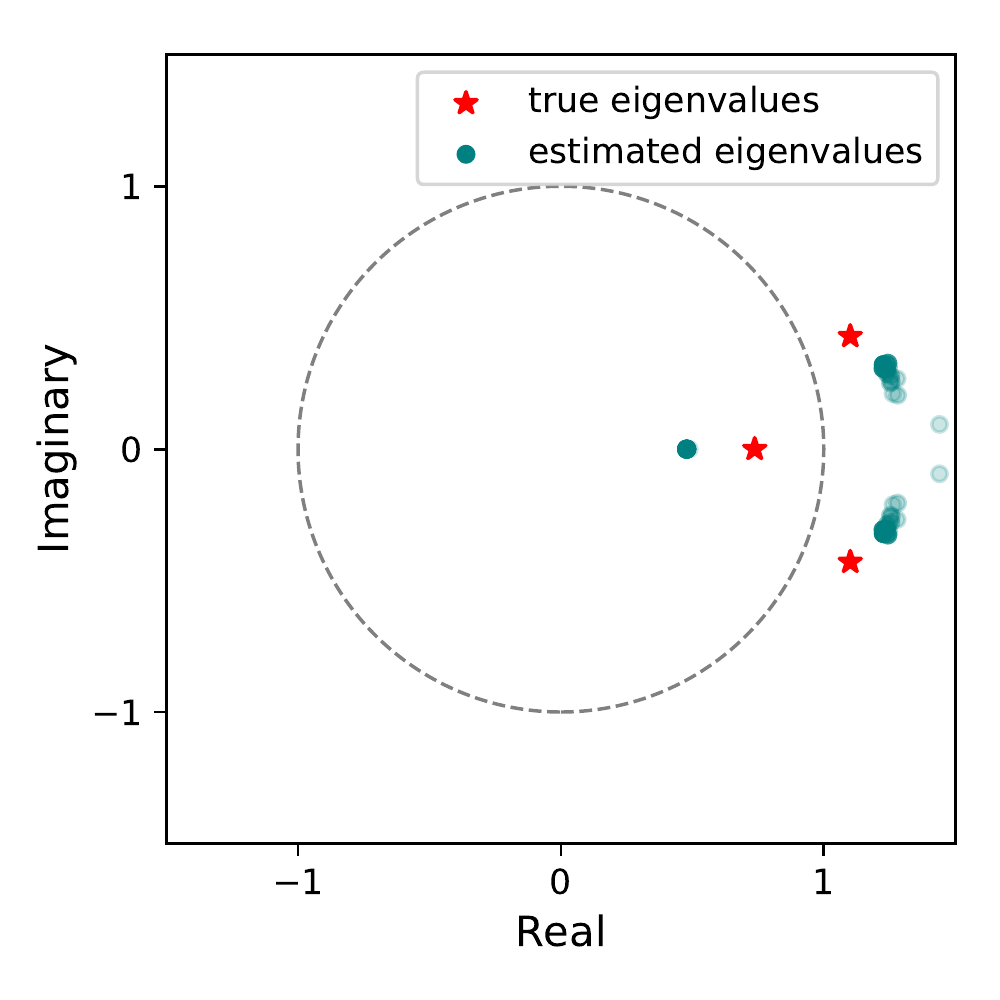}
}
\subfigure[Eigenvalues with logarithmic loss]{
\includegraphics[trim={0cm 0 0cm 0}, clip, scale=0.65]{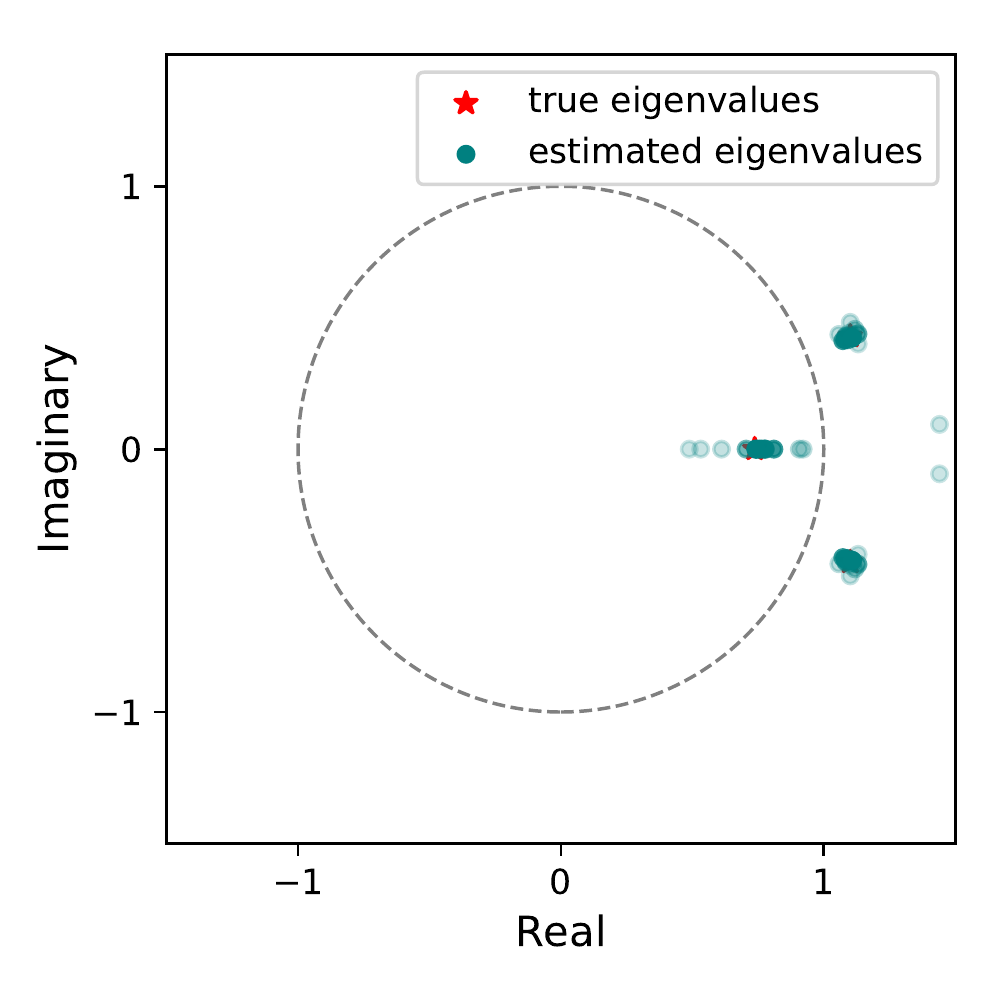}
}
\caption{A linear system with three-dimensional state space is trained with mean-squared-error loss [left] and time-weighted logarithmic loss [right]. The red stars show the eigenvalues of the real system, whereas the green dots show the eigenvalues of the estimated system. Earlier estimates of the eigenvalues are depicted with faded colors.}
\label{fig:seed10}
\end{figure}

\begin{figure}[ht]
\centering
\subfigure[Eigenvalues with mean-squared-error]{
\includegraphics[trim={0cm 0 0cm 0}, clip, scale=0.65]{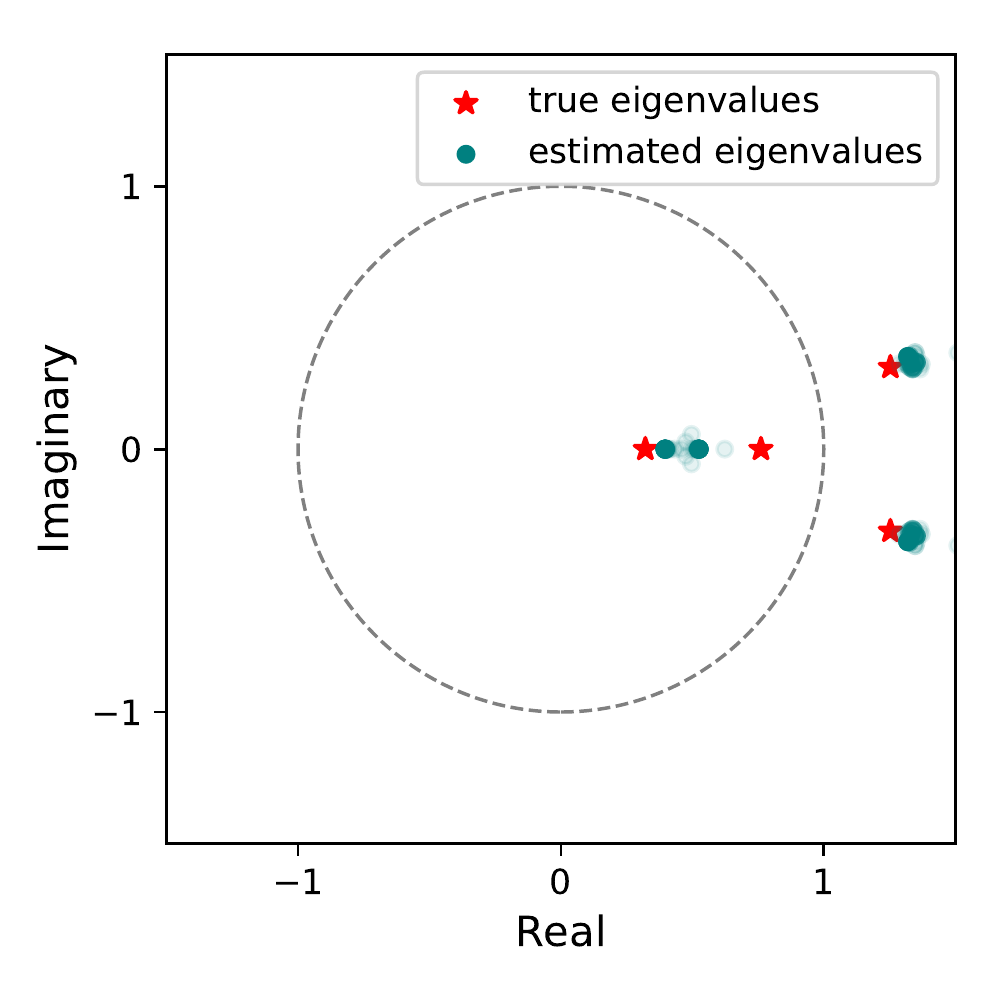}
}
\subfigure[Eigenvalues with logarithmic loss]{
\includegraphics[trim={0cm 0 0cm 0}, clip, scale=0.65]{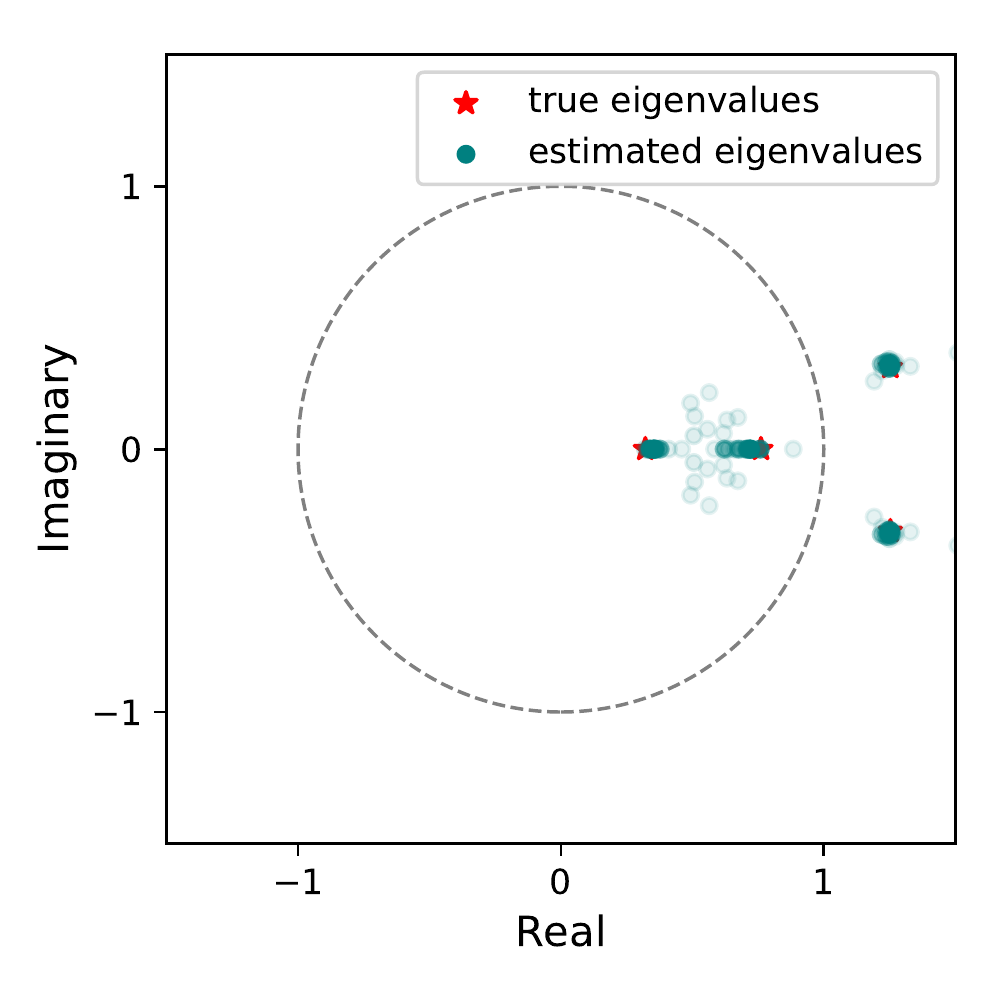}
}
\caption{A linear system with four-dimensional state space is trained with mean-squared-error loss [left] and time-weighted logarithmic loss [right]. The red stars show the eigenvalues of the real system, whereas the green dots show the eigenvalues of the estimated system. Earlier estimates of the eigenvalues are depicted with faded colors.}
\label{fig:seed20}
\end{figure}

\end{document}